\pgfplotsset{compat=1.18}
\def\BibTeX{{\rm B\kern-.05em{\sc i\kern-.025em b}\kern-.08em
    T\kern-.1667em\lower.7ex\hbox{E}\kern-.125emX}}
\newtheorem{theorem}{Theorem} 
\newtheorem{lemma}{Lemma} 
\newcommand{\kronecker}{\raisebox{1pt}{\ensuremath{\:\otimes\:}}}
\DeclareMathOperator{\rank}{rank}
\begin{document}

\title{Efficient algorithms for the Hadamard decomposition}

\author{\IEEEauthorblockN{Samuel Wertz}
\IEEEauthorblockA{\textit{University of Mons}\\
Mons, Belgium \\
Samuel.Wertz@student.umons.ac.be}
\and
\IEEEauthorblockN{Arnaud Vandaele}
\IEEEauthorblockA{\textit{University of Mons}\\
Mons, Belgium \\
Arnaud.Vandaele@umons.ac.be}
\and
\IEEEauthorblockN{Nicolas Gillis}
\IEEEauthorblockA{\textit{University of Mons}\\
Mons, Belgium \\
Nicolas.Gillis@umons.ac.be}
}

\definecolor{brightpink}{rgb}{1.0, 0.0, 0.5}
\newcommand{\ngc}[1]{{\color{brightpink} (\textbf{NG:} #1)}}
\newcommand{\ngi}[1]{{{\color{brightpink} #1}}}

\definecolor{forestgreen(web)}{rgb}{0.13, 0.55, 0.13}
\newcommand{\av}[1]{{\color{forestgreen(web)} (\textbf{AV:} #1)}}

\maketitle

\begin{abstract}
The Hadamard decomposition is a powerful technique for data analysis and matrix compression, which decomposes a given matrix into the element-wise product of two or more low-rank matrices. In this paper, we develop an efficient algorithm to solve this problem, leveraging an alternating optimization approach that decomposes the global non-convex problem into a series of convex sub-problems. To improve performance, we explore advanced initialization strategies inspired by the singular value decomposition (SVD) and incorporate acceleration techniques by introducing momentum-based updates. Beyond optimizing the two-matrix case, we also extend the Hadamard decomposition framework to support more than two low-rank matrices, enabling approximations with higher effective ranks while preserving computational efficiency. Finally, we conduct extensive experiments to compare our method with the existing gradient descent-based approaches for the Hadamard decomposition and with traditional low-rank approximation techniques. The results highlight the effectiveness of our proposed method across diverse datasets.
\end{abstract}

\begin{IEEEkeywords}
Matrix decomposition, Hadamard product, block-coordinate descent method
\end{IEEEkeywords}

\section{Introduction}
The general low-rank decomposition problem consists in approximating a given matrix \(X \in \mathbb{R}^{m \times n}\) by a matrix \(\tilde{X}\) of rank \(r < \min(m, n)\). The objective being to find the best approximation, it is common to consider the following optimization problem : 
\begin{equation}
\min_{\tilde{X}} \| X - \tilde{X} \|_F^2 \quad \text{s.t.} \quad \text{rank}(\tilde{X}) \leq r,
\label{eq:general_problem}
\end{equation}
where \( \| . \|_F\) denotes the Frobenius norm.
To enforce the rank constraint on \(\tilde{X}\), a commonly used approach is to write the matrix as \(\tilde{X} = WH\) with \(W \in \mathbb{R}^{m \times r}\) and \(H \in \mathbb{R}^{r \times n}\). Thus, the optimization problem \eqref{eq:general_problem} can be reformulated as follows: 
\begin{equation}
    \label{eq:general_problem_2factors}
    \min_{W, H}{\| X - WH \|_F^2} . 
\end{equation} 
For sufficiently small values of \(r\), specifically \(r < \frac{mn}{m+n}\), this decomposition enables compression of the original matrix. This property makes low-rank decompositions a powerful tool for compression. 
Beyond compression, low-rank factorizations are valuable for their ability to extract meaningful features from data, which has established them as essential techniques in data analysis and machine learning. 
The most famous model used to resolve this low-rank approximation problem is the singular value decomposition (SVD), a fundamental technique in linear algebra. For a given matrix \(X \in \mathbb{R} ^{m \times n}\), the SVD factorizes \(X\) as $X = U\Sigma V^T$, where 
    \(U \in \mathbb{R}^{m \times m}\) is an orthogonal matrix whose columns are the left singular vectors of \(X\),
    \(\Sigma \in \mathbb{R}^{m \times n}\) contains the singular values in nonincreasing order on its diagonal, and 
    \(V \in \mathbb{R}^{n \times n}\) is an orthogonal matrix whose columns are the right singular vectors.
The Eckart–Young–Mirsky theorem proves that using 
\begin{align*}
    W = U(:, :r) \sqrt{\Sigma(:r,:r)}, \quad  
    H = \sqrt{\Sigma(:r,:r)} V^T(:r,:),  
\end{align*}
provides an optimal solution 
for~\eqref{eq:general_problem_2factors}.

\paragraph*{The Hadamard Decomposition}  
Standard low-rank approximations are limited in the rank of \(\tilde{X}\) by \(r\) and cannot exploit element-wise sparsity or structural properties inherent in some datasets. These limitations have motivated the development of the model we will now study. 
The Hadamard product between two matrices of the same dimension $A$ and $B$ (also referred to as the element-wise or component-wise product), denoted by \(A \circ B\), is defined as:  
\begin{equation}
    (A \circ B)_{i,j} = A_{i,j} \cdot B_{i,j} \quad \text{ for all } i,j. \nonumber
\end{equation} 
The Hadamard decomposition seeks to approximate a given matrix \(X\) as the element-wise product of two low-rank matrices, expressed as:  
\begin{equation}
    \tilde{X} = (W_1 H_1) \circ (W_2 H_2), \nonumber
\end{equation}
where \(W_1, W_2 \in \mathbb{R}^{m \times r}\) and \(H_1, H_2 \in \mathbb{R}^{r \times n}\).   
This leads to the following optimization problem:  
\begin{equation}
    \label{eq:hadamard_problem}
    \min_{W_1, H_1, W_2, H_2} \| X - (W_1 H_1) \circ (W_2 H_2) \|_F^2 . 
\end{equation} 
First introduced in \cite{b1} with two matrices of rank $r$, this model is able to reach a maximal rank of \(r^2\) for \(\tilde{X}\). In fact, it can be shown that the component-wise product of two rank-$r$ matrices has rank at most $r^2$. To solve this problem, \cite{b2} proposed a straightforward approach based on alternating gradient descent, optimizing one factor at a time, $W_1$, $H_1$, $W_2$ or $H_2$, while keeping the others fixed. This method iteratively minimizes the reconstruction error and has demonstrated effectiveness in approximating complex datasets. However, this solution has certain limitations, including the reliance on a manually set fixed step size for the gradient descent and the random initialization of matrices, among others.

\paragraph*{Contribution and outline} In this work, we 
(i)~propose a block-coordinate descent (BCD) algorithm for significantly more efficient updates of the factor matrices in the Hadamard decomposition,
(ii)~investigate improved initialization strategies, and 
(iii)~explore acceleration techniques to speed up convergence and handle larger datasets more effectively. 

The remainder of this paper is structured as follows. 
Section~\ref{sec:proposedalg} details the proposed BCD optimization framework, initialization strategies, and acceleration techniques. 
Section~\ref{sec:morefact} explains how our strategies can be adapted when the Hadamard product of more than two matrices is used to approximate $X$. 
Experimental results are presented in Section~\ref{sec:experiments}, where we compare our approach with the SVD and other baselines. Finally, Section~\ref{sec:concl} concludes the paper and outlines directions for future research.

\section{BCD for the Hadamard Decomposition} \label{sec:proposedalg}
Since \eqref{eq:hadamard_problem} is non-convex, 
we adopt an alternating optimization approach, wherein the four factor matrices, $W_1$, $H_1$, $W_2$ and $H_2$, are updated sequentially; see Alg.~\ref{alg:BCD}.  
The reason for this choice is that, when three of the factors are fixed, the resulting sub-problem in the last factor is convex and can be solved efficiently; 
see below.  
Moreover, due to the symmetry of the problem, the four factors can be updated using the same function, which we denote \texttt{UpdFact}; see Alg.~\ref{alg:updateH}. 
\begin{algorithm}[h]
    \caption{Alternating scheme for Hadamard decomposition problem}
    \label{alg:BCD}
    \SetKwFunction{FMain}{BCD}
    \SetKwProg{Fn}{Function}{:}{}
    \Fn{\FMain{$X$, $maxiter$}}{
        $W_1, H_1, W_2, H_2 \gets initialization$\;
        \For{$i \gets 1$ \KwTo $maxiter$}{
            $H_2 \gets \texttt{UpdFact}(X, W_1, H_1, W_2, H_2)$\;
            $W_2 \gets \texttt{UpdFact}(X^T, H_1^T, W_1^T, H_2^T, W_2^T)$\;
            $H_1 \gets \texttt{UpdFact}(X, W_2, H_2, W_1, H_1)$\;
            $W_1 \gets \texttt{UpdFact}(X^T, H_2^T, W_2^T, H_1^T, W_1^T)$\;
        }
        \Return $W_1, H_1, W_2, H_2$\;
    }
\end{algorithm}
\begin{algorithm}[ht!]
    \caption{\texttt{UpdFact} function}
    \label{alg:updateH}
    \SetKwFunction{FMain}{\texttt{UpdFact}}
    \SetKwProg{Fn}{Function}{:}{}
    
    \Fn{\FMain{$X, W_1, H_1, W_2, H_2$}}{ 
        \For{$j \gets 1$ \KwTo $n$}{
            $H_2(:,j) \gets \texttt{hadLS}(
                W_1H_1(:,j), W_2, X(:,j))$\;
        }
        \Return $H_2$
    }
\end{algorithm}

Let us now discuss the update of one factor,  and consider w.l.o.g.\ the update of \(H_2\). The corresponding minimization problem is convex and column-separable, as it can be expressed as the sum of $n$ independent problems :  
\begin{equation}
    \label{eq:column_separation}
    \min_{H_2} \sum_{j=1}^n \| X(:,j) - (W_1 H_1(:,j)) \circ (W_2 H_2(:,j)) \|_F^2.
\end{equation} 
To solve \eqref{eq:column_separation}, \texttt{UpdFact} (see Alg.~\ref{alg:updateH}) solves the \(n\) column-wise subproblems one after the other using the \texttt{hadLS} subroutine.

The \texttt{hadLS} function solves the least squares problem corresponding to a single column of $H_2$, denoted $H_2(:,j)$. In order to simplify the presentation for this subproblem, we introduce the notation summarized in Table~\ref{tab:notations1}. 
\begin{table}[h]
    \centering
    \begin{tabular}{l|l}
        \hline
        \textbf{Notation} & \textbf{Description} \\
        \hline
        \(x = H_2(:,j)\) & The optimization variable \(H_2(:,j)\) \\
        \(A = W_2\) & The matrix \(W_2\), serving as a factor in the decomposition \\
        \(b = X(:,j)\) & The \(j\)-th column of the input matrix \(X(:,j)\) \\
        \(s = W_1 H_1(:,j)\) & The Hadamard product term \(W_1 H_1(:,j)\) \\
        \hline
    \end{tabular}
    \vspace{3pt}
    \caption{Least squares problem notation.}
    \label{tab:notations1}
\end{table} 
This allows us to write the column-wise optimization problem to update $H_2(:,j)$ as follows: 
\begin{equation}
    \min_{x} f(x) := \| s \circ (Ax) - b \|_2^2 . 
    \label{eq:obj}
\end{equation} 
This Hadamard least squares problem is quadratic and unconstrained, the gradient and the Hessian are given by:
\begin{align}
    \nabla f(x) &= A^T \left( (s \circ (Ax) - b) \circ s \right) \notag \\
                &= (A^T \, \text{Diag}(s^2) \, A)x - A^T (s \circ b) , \label{eq:gradient} \\
    \nabla^2 f(x) &= A^T \, \text{Diag}(s^2) \, A .\label{eq:hessian}
\end{align}

\subsection{Solving hadLS~\eqref{eq:obj}} 

We explore two strategies to solve hadLS~\eqref{eq:obj}. 

\subsubsection{Gradient descent (GD) based iterative methods} 

We first use an iterative gradient descent using \eqref{eq:gradient}. To have convergence, the stepsize should be chosen carefully. We implemented two variants: one that calculates the stepsize using the Lipschitz constant, and another that computes the optimal stepsize.

\subsubsection{Exact solution} 

The optimal solution for \(x\) can be computed as the solution of a quadratic, unconstrained problem. Setting \(\nabla f(x) = 0\) yields  
    $\left( A^T \, \text{Diag}(s^2) \, A \right) x =  A^T (s \circ b)$. 
This requires solving a linear system in $r$ variables, with costs  $\mathcal{O}(r^3)$. This exact solution eliminates the need for (explicit) iterative updates, taking advantage of higly-efficient libraries to solve least squares problem, and provides an optimal solution for the subproblems; see Alg.~\ref{alg:analytic}. 
\begin{algorithm}[h]
    \caption{\texttt{hadLS} with Exact Resolution}
    \label{alg:analytic}
    \SetKwFunction{Fsecond}{\texttt{hadLS}}
    \SetKwProg{Fn}{Function}{:}{}
    \Fn{\Fsecond{\texttt{s}, \texttt{A}, \texttt{b}}}{
        $H \gets A^T \, \text{Diag}(s^2) \, A$\; 
        $d \gets A^T (s \circ b)$\;
        $x \gets H$\textbackslash $d$\;
        \Return $x$\;
    }
\end{algorithm}

\subsubsection{Complexity analysis}  
For the three versions of the algorithm, we first compute the quantities $H$ in $\mathcal{O}(mr^2)$ and $d$ in $\mathcal{O}(mr)$. We can also compute the Lipschitz constant by computing the singular values of $A^T \, \text{Diag}(s)$ in $\mathcal{O}(mr^2)$. Solving the linear system costs $\mathcal{O}(r^3)$ and computing the gradient only cost $\mathcal{O}(r^2)$ when $H$ and $d$ are already constructed. In order to take advantage of this difference, we perform an appropriate number of inner iterations for the gradient descent using the same pre-computed $H$ and $d$. Since the optimal step size costs only $\mathcal{O}(r^2)$, it can be computed at each inner iteration.

\subsection{Initializations}
\label{sec:init}

In this section, we propose several initialization methods, including well-known techniques from machine learning and approaches inspired by the methods presented in \cite{b4}.

\subsubsection{Xavier Initialization}

Xavier initialization is widely used in machine learning to initialize network weights. It aims to maintain the variance of activations throughout the layers. 

- Uniform Xavier uses a uniform distribution between \(-\beta\) and \(\beta\), with $\beta = \sqrt{6/(m+n)}$.

- Normal Xavier samples from a normal distribution centered at 0 with a standard deviation of:
    $\sigma = \sqrt{{2}/{(m+n)}}$. 

These methods are especially effective for balancing the scale of weights across the network.

\subsubsection{SVD-Based Initialization}
\label{sec:svd-init}

As the SVD leads to the best rank-$r$ approximation with a single matrix, we propose an efficient initialization by computing two SVDs. The idea is as follows: Let us define $M = \sqrt{|X|}$ as the element-wise square root of \(|X|\): 
$M(i,j) = \sqrt{|X(i,j)|}$ for all  $i,j$, and  $M_{\text{signed}} = M \circ \text{sign}(X)$ as the same matrix but with the signs of the entries of $X$ taken into account. This implies that $X = M \circ M_{\text{signed}}$. Then we initialize $W_1 H_1$ as the best rank-$r$ approximation of $M$, and $W_2 H_2$ as the best rank-$r$ approximation of $M_{\text{signed}}$ so that 
\[ 
X = M \circ M_{\text{signed}}
\approx (W_1 H_1) \circ  (W_2 H_2) 
\]
is a meaningful Hadamard decomposition of $X$, capturing both magnitude and sign information from \(X\).







\subsubsection{K-Means Initialization}

K-Means-based initialization uses the same scheme as the SVD-based one where we replace the SVDs by k-means clustering to initialize the factors: 
The \(r\) cluster's centroids obtained from the K-Means algorithm are used to initialize \(W_i\) and the associations between data points and centroids provide the initialization of \(H_i\).

\subsubsection{Optimal scaling} Given any initialization, $\tilde X = (W_1 H_1) \circ  (W_2 H_2)$, it can be improved by scaling it optimally by solving \( \min_{\alpha} || X - \alpha \tilde{X} || \) with optimal solution \( \alpha^* = \frac{\langle \tilde{X}, X \rangle}{|| \tilde{X} ||_F^2 } \). We apply this trick to all initializations by multiplying the first factor by sign$(\alpha^*)(\sqrt[4]{|\alpha^*|})$ and the three others by \(\sqrt[4]{|\alpha^*|}\). 


Fig.~\ref{fig:init} presents the average relative error \(e(t) = \frac{\|X - \tilde{X}(t) \|}{\|X\|}\), where $t$ is the iteration index,  computed over 10 trials for the 5 initialization methods on synthetic full-rank datasets of size \(100 \times 100\) generated by sampling the standard normal distribution. 
This error is normalized by subtracting the minimum error \(e_{min}\) obtained across all methods and initializations, and then dividing by the difference between the initial error \(e(0)\) and the minimum error. 
\begin{figure}[ht]
        \centering
        \begin{tikzpicture}[scale=0.8]
            \begin{axis}[
                xlabel={Iterations},
                ylabel = {$\frac{e(t) - e_{min}}{e(0) - e_{min}}$},
                ymode=log,
                grid=both,
                legend style={at={(0.3, 0.45)}, anchor=north, legend columns=1, font=\large},
                tick label style={font=\large},
                xlabel style={font=\large}, 
                ylabel style={font=\LARGE},
            ]
                
                \addplot[color=red, mark=o] table [x=x1, y=y1, col sep=space] {Graphes/init_comp_opti_scale.txt};
                \addlegendentry{Random}
    
                \addplot[color=blue, style=dashed] table [x=x2, y=y2, col sep=space] {Graphes/init_comp_opti_scale.txt};
                \addlegendentry{Uniform Xavier}
    
                \addplot[color=green, mark=square*] table [x=x3, y=y3, col sep=space] {Graphes/init_comp_opti_scale.txt};
                \addlegendentry{Normal Xavier}
    
                \addplot[color=violet, mark=triangle*] table [x=x4, y=y4, col sep=space] {Graphes/init_comp_opti_scale.txt};
                \addlegendentry{SVD}
    
                \addplot[color=orange, mark=star] table [x=x5, y=y5, col sep=space] {Graphes/init_comp_opti_scale.txt};
                \addlegendentry{K-means}
            \end{axis}
        \end{tikzpicture}
    \caption{Evolution of the normalized average error  for the five initializations for BCD with exact resolution.}
    \label{fig:init}
\end{figure}
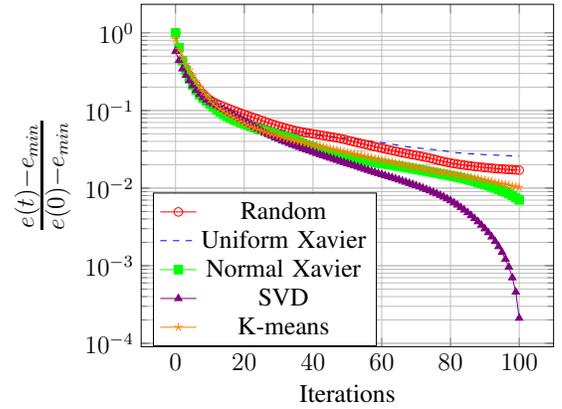
We observe that the SVD initialization provides better initial points and allows convergence to better minima. 

\subsection{Acceleration by Adding Momentum}

To accelerate the convergence of our algorithm, 
we propose a momentum-based approach by extrapolating the updated matrices at each iteration. Specifically, the following extrapolation step is added at the end of the \texttt{UpdFact} function:
\[
    H_2 = H_2 + \beta_k (H_2 - H_{2\text{old}}), 
\]
where \(H_{2\text{old}}\) is the value of \(H_2\) from the previous iteration. 


For the choice of the extrapolation parameter \(\beta_k\), we rely on the strategy proposed by Ang et al.~\cite{b6}. The extrapolation parameter is dynamically updated according to the improvement of the objective function at each iteration. The procedure is described in Alg.~\ref{alg:beta}. 
The method uses parameters \(1 \leq \tilde{\gamma} \leq \gamma \leq \eta\), \(\beta_0 \in [0, 1]\) and initializes \(\tilde{\beta} = 1\). 
\begin{algorithm}[h]
    \caption{Update of the Momentum Coefficient \(\beta_k\)}
    \label{alg:beta}
    \SetKwFunction{Fsecond}{\texttt{betaUpdate}}
    \SetKwProg{Fn}{Function}{:}{}
    \Fn{\Fsecond{$\beta_k, \tilde{\beta}, \tilde{\gamma}, \gamma, \eta$}}{
        \If{The error decreased at iteration \(k\)}{
            $\beta_{k+1} \gets \min(\tilde{\beta}, \gamma \beta_k)$;
            $\tilde{\beta} \gets \min(1, \tilde{\gamma} \tilde{\beta})$\;
        }
        \Else{
            $\beta_{k+1} \gets \frac{\beta_k}{\eta}$; $\tilde{\beta} \gets \beta_k$\;
        }
        \Return $\beta_{k+1}, \tilde{\beta}$\;
    }
\end{algorithm}
Momentum-based acceleration can significantly speed up convergence, particularly in settings where the updates oscillate or slow down near local minima.
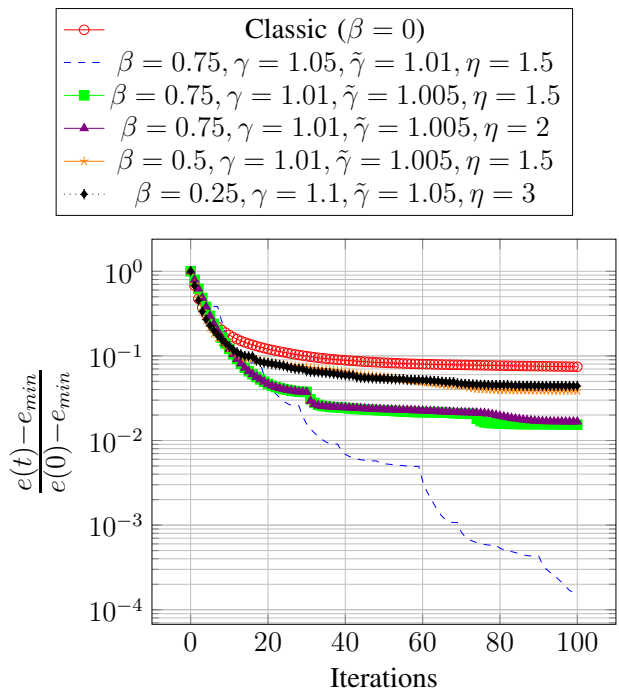
\begin{figure}[ht]
        \centering
        \begin{tikzpicture}[scale=0.9]
            \begin{axis}[
                xlabel={Iterations},
                ylabel = {$\frac{e(t) - e_{min}}{e(0) - e_{min}}$},
                ymode=log,
                grid=both,
                legend style={at={(0.35, 1.6)}, anchor=north, legend columns=1, font=\large},
                tick label style={font=\large},
                xlabel style={font=\large}, 
                ylabel style={font=\LARGE}
            ]
                
                \addplot[color=red, mark=o] table [x=x6, y=y6, col sep=space] {Graphes/acceleration_comparison.txt};
                \addlegendentry{Classic ($\beta = 0$)}
    
                \addplot[color=blue, style=dashed] table [x=x1, y=y1, col sep=space] {Graphes/acceleration_comparison.txt};
                \addlegendentry{$\beta = 0.75, \gamma = 1.05, \tilde{\gamma} = 1.01, \eta = 1.5$}
    
                \addplot[color=green, mark=square*] table [x=x2, y=y2, col sep=space] {Graphes/acceleration_comparison.txt};
                \addlegendentry{$\beta = 0.75, \gamma = 1.01, \tilde{\gamma} = 1.005, \eta = 1.5$}
    
                \addplot[color=violet, mark=triangle*] table [x=x3, y=y3, col sep=space] {Graphes/acceleration_comparison.txt};
                \addlegendentry{$\beta = 0.75, \gamma = 1.01, \tilde{\gamma} = 1.005, \eta = 2$}

                \addplot[color=orange, mark=star] table [x=x4, y=y4, col sep=space] {Graphes/acceleration_comparison.txt};
                \addlegendentry{$\beta = 0.5, \gamma = 1.01, \tilde{\gamma} = 1.005, \eta = 1.5$}
                
                \addplot[color=black, style=dotted, mark=diamond*] table [x=x5, y=y5, col sep=space] {Graphes/acceleration_comparison.txt};
                \addlegendentry{$\beta = 0.25, \gamma = 1.1, \tilde{\gamma} = 1.05, \eta = 3$}

            \end{axis}
        \end{tikzpicture}
    \caption{Impact of extrapolation on BCD with exact resolution.} 
    \label{fig:momentum_effect}
\end{figure}

Fig.~\ref{fig:momentum_effect} uses the same settings as   Fig.~\ref{fig:init} to compare various choices of parameters. We observe that every combination of parameters converges on average faster than the non-accelerated version, and the best set of parameters, for this experiment, is ($\beta = 0.75, \gamma = 1.05, \tilde{\gamma} = 1.01, \eta = 1.5$). 

\section{More than two matrices in the decomposition} \label{sec:morefact}

The Hadamard decomposition in \eqref{eq:hadamard_problem} can be generalized to include more than two low-rank matrices: 
\begin{equation}
    \min_{\{W_i, H_i\}_{i=1}^p} \| X - (W_1 H_1) \circ (W_2 H_2) \circ ... \circ (W_p H_p) \|_F^2 , 
\end{equation}
where \(p\) denotes the number of low-rank matrices. 
This extension is motivated by the (potential) increase of the rank of the approximation, which can be up to \(\prod_{i=1}^p r_i\) where each low-rank matrix has rank $r_i = \rank(W_i H_i)$, and hence  provides  more expressiveness in approximating large matrices. 
Let us illustrate this with the following theorem. 
\begin{theorem} \label{th:identity}
Let $R = \sum_{i=1}^p r_i$ be the budget\footnote{We call this the budget because two decompositions with the same budget have the same number of parameters, in the factors $(W_i, H_i)$'s, namely $\sum_{i=1}^p r_i(m+n) = R (m+n)$.} of an Hadamard decomposition with $p$ matrices of rank $r_i$. 
Such a decomposition can reconstruct exactly the $n$-by-$n$ identity matrix for any  \vspace{-0.1cm} 
\begin{equation*}
   n \leq N = \left\{ 
    \begin{array}{cl}
       3^k  & \text{ if $R = 3k$ for $k\in\mathbb{N}$},  \\
       4 \cdot 3^{k-1} & \text{ if $R = 3k+1$ for $k\in\mathbb{N}$},  \\
       2 \cdot 3^{k} & \text{ if $R = 3k+2$ for $k\in\mathbb{N}$}.  \\
    \end{array} 
    \right. 
\end{equation*}
In such decompositions, $r_i \in \{2,3\}$ for all $i$. 
    
\end{theorem}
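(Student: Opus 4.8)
The plan is to establish the theorem by an explicit construction together with an optimality argument showing that the stated $N$ is the best achievable, the latter being essentially a ``knapsack''-style combinatorial optimization over the sizes $r_i$. The key observation driving the construction is a \emph{multiplicativity} property: if the $n_1$-by-$n_1$ identity can be written as a Hadamard product of low-rank matrices with budget $R_1$, and the $n_2$-by-$n_2$ identity with budget $R_2$, then the $(n_1 n_2)$-by-$(n_1 n_2)$ identity can be written with budget $R_1 + R_2$, simply by taking Kronecker products of the corresponding factors (since $(A_1 \kronecker A_2)\circ(B_1\kronecker B_2) = (A_1\circ B_1)\kronecker(A_2\circ B_2)$ in the appropriate index arrangement, and $I_{n_1}\kronecker I_{n_2} = I_{n_1 n_2}$, while $\rank$ is multiplicative under $\kronecker$ so ranks add in the budget). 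Hence it suffices to find, for small budgets, the largest identity expressible; then for a general budget we split $R$ into a sum of these small ``atoms'' and multiply the sizes.

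First I would handle the atomic cases. With a single rank-$r$ matrix $W H$ (budget $r$) one can only reconstruct $I_n$ for $n \le r$ trivially — this is weak. The real gain comes from two matrices: I would show that $(W_1 H_1)\circ(W_2 H_2)$ with $r_1 = r_2 = 2$ (budget $4$) can realize $I_4$, and more importantly — this is the crux — that with $r_1 = r_2 = 3$ (budget $6$) one can realize... wait, the stated rate is $3^k$ for $R = 3k$, i.e.\ budget $3$ buys a factor of $3$ in dimension. So the atom is: \emph{budget $3$ suffices to express $I_3$}. Concretely, with two matrices of ranks $\{1,2\}$, or by a single rank-$3$ matrix? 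A single rank-$3$ matrix gives $I_3$ trivially (it has rank $3$), so budget $3$ $\mapsto$ dimension $3$ is immediate. Then budget $3k \mapsto 3^k$ by the Kronecker tensoring above. For the remainders: budget $4 = 3+1$ would give only $3\cdot 1$, but the table says $4$, so the $R=4$ atom must genuinely beat concatenation — one needs that two rank-$2$ matrices Hadamard-multiply to $I_4$ (rank $4 = 2^2$, consistent with the rank bound $r_1 r_2$). I would exhibit such $W_1,H_1,W_2,H_2 \in \mathbb{R}^{4\times 2}, \mathbb{R}^{2\times 4}$ explicitly, e.g.\ using a $\pm 1$ pattern so that the two rank-$2$ products have supports/signs whose elementwise product cancels off-diagonal. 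Similarly for budget $5 = 3+2$: a rank-$3$ atom ($I_3$) Kronecker-times a ``budget-$2$'' atom giving $I_2$ (a single rank-$2$ matrix), yielding $2\cdot 3^k$; this forces the $r_i\in\{2,3\}$ statement since the optimal decomposition of any $R$ uses only the atoms $3$ (as a single rank-$3$ block), $4$ ($=2+2$, two rank-$2$ blocks), and $2+3$ or $2+2+... $ — actually one should note $4 = 2+2$ and $5 = 2+3$ and $3 = 3$, so every $R\ge 2$ decomposes into parts of size $2$ and $3$, and the $N$ values come from maximizing $\prod(\text{atom dimensions})$ where atom $3 \mapsto 3$ and atom of a $\{2,2\}$-pair $\mapsto 4$ and atom $2\mapsto 2$.

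Second, the optimality/recursion: I would argue $N(R)$ satisfies $N(R) = \max\{\,3\cdot N(R-3),\ 4\cdot N(R-4),\ 2\cdot N(R-2)\,\}$ with small base cases $N(0)=1, N(1)=1, N(2)=2, N(3)=3, N(4)=4$, and then check by induction (three cases on $R \bmod 3$) that the closed form in the theorem solves this recursion — the point being that per unit of budget, $3^{1/3} \approx 1.442 > 4^{1/4}\approx 1.414 > 2^{1/2}\approx 1.414$, so one uses as many budget-$3$ atoms as possible, padding with one budget-$4$ atom when $R\equiv 1$ and one budget-$2$ atom when $R\equiv 2$, which is exactly what the three cases encode. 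The upper bound direction — that one \emph{cannot} do better — requires showing that any Hadamard decomposition reconstructing $I_n$ with budget $R$ must have $n \le N(R)$; I expect this to be the main obstacle, since it is not a pure rank bound ($\prod r_i$ can exceed $N(R)$, e.g.\ two rank-$3$'s give rank up to $9 > 4\cdot 3^0$... but budget $6$ gives $N=9$ via $3\cdot 3$, consistent, hmm) — so actually one must rule out, say, a single rank-$4$ block reconstructing $I_4$ with budget $4$ beating... it ties. The genuinely delicate part is proving no clever single decomposition with mixed large $r_i$ exceeds the tensor-of-atoms bound; I would try to prove it by a rank/support argument: reconstructing $I_n$ forces, columnwise, that for each $j$ the vector $(W_1H_1)(:,j)\circ\cdots\circ(W_pH_p)(:,j)$ is $e_j$, and counting the degrees of freedom / the number of sign patterns realizable gives the bound, reducing to the known fact that $n$ distinct columns of an identity need the product structure to have at least $N^{-1}(n)$ budget. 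If a fully rigorous upper bound proves too long for the venue, the fallback is to prove the construction (lower bound on $N$) in full and state the matching upper bound as following from the recursion with the combinatorial verification deferred.
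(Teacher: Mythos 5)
Your construction is essentially the one the paper uses: its Lemma~2 builds $I_r\otimes A$ from a Hadamard decomposition of $A$ by padding each existing factor with an all-ones block ($1_{r\times r}\otimes X_i$, same rank) and appending one new factor $I_r\otimes 1_{m\times n}$ of rank $r$, and its Lemma~1 is exactly your combinatorial optimization of $\prod_i r_i$ subject to $\sum_i r_i=R$, with parts in $\{2,3\}$ and at most two $2$'s. Iterating that lemma starting from $I_2$ and $I_3$ is the same as your tensor-of-atoms plan, and your budget-$4$ atom is an instance of it: $I_4=(1_{2\times2}\otimes I_2)\circ(I_2\otimes 1_{2\times2})$, two rank-$2$ factors with a $0/1$ pattern (no $\pm1$ trick needed). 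For $n<N$ one simply restricts the construction for $I_N$ to a leading principal submatrix, which preserves the budget.

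Two points need repair. First, in your multiplicativity step, ``taking Kronecker products of the corresponding factors'' cannot mean pairing $X_i$ with $Y_i$: since $\rank(X_i\otimes Y_i)=\rank(X_i)\rank(Y_i)$, the budgets would multiply, not add. You must keep all $p+q$ factors and pad, i.e.\ use $X_i\otimes 1_{n_2\times n_2}$ for $i=1,\dots,p$ and $1_{n_1\times n_1}\otimes Y_j$ for $j=1,\dots,q$; this is your displayed identity with one operand taken to be the rank-one all-ones matrix, and only then do the ranks, hence the budgets, add to $R_1+R_2$. Second, you substantially overestimate the converse direction. Any decomposition of budget $R$ reconstructing $I_n$ satisfies
\begin{equation*}
n=\rank(I_n)\le\prod_{i=1}^p\rank(W_iH_i)\le\prod_{i=1}^p r_i\le\max\Bigl\{\textstyle\prod_i r_i \;:\; \sum_i r_i=R\Bigr\}=N(R),
\end{equation*}
where the last equality is precisely the combinatorial lemma you already have. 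Your worry that ``$\prod r_i$ can exceed $N(R)$'' is impossible by the very definition of $N(R)$ as that maximum, so the support/sign-pattern and degrees-of-freedom machinery you sketch, and the fallback of deferring the upper bound, are unnecessary: the optimality half is a one-line consequence of $\rank(A\circ B)\le\rank(A)\rank(B)$, which is how the paper argues it.
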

\begin{proof}
    See Appendix~\ref{sec:appendixB}. 
\end{proof}
Having more than two low-rank matrices requires some changes in BCD; see Alg.~\ref{alg:multiple}. 
\begin{algorithm}[!htbp]
    \caption{Hadamard decomposition with $p$ factors}
    \label{alg:multiple}
    \SetKwFunction{FMain}{\texttt{BCDmultiple}}
    \SetKwProg{Fn}{Function}{:}{}
    \Fn{\FMain{$X$, $maxiter$}}{
        $W \gets [\;]$; $H \gets [\;]$ \tcp*{Empty lists}  
        \For{$i \gets 1$ \KwTo $p$}{
            $W(i), H(i) \gets initialization()$ \;
        }
        \For{$k \gets 1$ \KwTo $maxiter$}{
            \For{$i \gets 1$ \KwTo $p$}{
                $P(s,t) = 1$ $\forall s,t$ \;
                \For{$j \gets 1$ \KwTo $p$}{
                    \If{$j \neq i$}{
                        $P \gets P \circ W(j) H(j)$ \;
                    }
                }
                 \small $H(i) \gets \texttt{UpdtFact}(X, P, W(i), H(i))$\;
                 \small $W(i) \gets \texttt{UpdtFact}(X^T, P^T, H(i)^T, W(i)^T)$\; \normalsize 
            }
        }
        \Return W, H\;
    }

    \SetKwFunction{Fsecond}{\texttt{UpdtFact}}
    \SetKwProg{Fn}{Function}{:}{}
    \Fn{\Fsecond{$X, WH_1, W_2, H_2$}}{
        \For{$j \gets 1$ \KwTo $n$}{
            $H_2(:,j) \gets \texttt{hadLS} (WH_1(:,j) ,W_2, X(:,j))$\;
        }
        \Return $H_2$\;
    }
    
\end{algorithm} 
The function \texttt{hadLS} remains applicable and unchanged from the previous two-matrix case.

\textit{Initialization.} Let us generalize the SVD-based initialization from Section~\ref{sec:init}. Assume all ranks are equal for simplicity. 
We start with two low-rank matrices: 
\(W_1H_1\) of rank \(r\) that approximates $\sqrt{|X|}$, and \(X'\) of rank \(r' = (p-1)r\) that approximates sign$(X) \circ \sqrt{|X|}$,  using the same idea as in Section~\ref{sec:init}. 
Next, we apply the same initialization method to \(X'\), producing a new rank-\(r\) matrix \(W_2H_2\), and  \(X''\) of rank \(r'' = (p-2)r\), which undergoes further decomposition. This recursive process continues until we obtain \(p\) matrices, each of rank \(r\). At the end of the process, these matrices collectively approximate \(X\).

\section{Numerical experiments} \label{sec:experiments}
In this section, we present a series of numerical experiments to evaluate the performance of the proposed Hadamard decomposition framework whose code is available on GitHub: {\color{blue}\href{https://github.com/WertzSamuel/HadamardDecompositions}{github.com/WertzSamuel/HadamardDecompositions}}. First, we compare our method with the previous algorithm from \cite{b2} designed to solve the same Hadamard decomposition problem, assessing both reconstruction accuracy and computational efficiency. Then, we benchmark our approach against the Singular Value Decomposition (SVD) to highlight its potential for achieving comparable or better approximations while leveraging the structural advantages of the Hadamard model. These comparisons are conducted on both synthetic and real-world datasets to ensure a comprehensive evaluation.

\subsection{Comparison with the previous method from~\cite{b2}} 

The implementation of the previous method from~\cite{b2} was retrieved from the GitHub repository \cite{github}, and is referred to as Alternating Gradient Descent (AGD). It leverages the \texttt{numba} library, which translates Python functions into optimized machine code to achieve performance comparable to that of C. By employing this library, the method efficiently performs a large number of iterations in a relatively short amount of time, with the default number of iterations set to 225,000. To minimize graphs abscises axis length, only one error value out of every 1,000 iterations is stored in the output. The graphs displaying the error evolution with respect to the number of iterations are not entirely comparable, as a single iteration in their implementation corresponds to 1,000 iterations in ours.

\subsubsection{Initialization}

Given that the authors employed a distinct initialization method, we conducted a comparison between both approaches across four different datasets to determine the most effective initialization strategy, which was then applied uniformly to both algorithms. Table \ref{tab:initialization} summarizes the reconstruction realtive errors \(err = \frac{\|X - \tilde{X} \|}{\|X\|}\) obtained with the initial matrices for each method.  
\renewcommand{\arraystretch}{1.2}
\begin{table}[!htbp]
    \begin{center}
    \resizebox{0.45\textwidth}{!}{\begin{tabular}{|c|c|c|c|}
        \hline
        Dataset & Rank & AGD~\cite{b2} & SVD-based  \\
        \hline
        \hline
        \multirow{3}{*}{\textbf{Synthetic data}} & \textbf{r = 10} & 11.083 & 0.429 \\ 
        \cline{2-4} & \textbf{r = 20} & 43.897 & 0.363 \\ 
        \cline{2-4} & \textbf{r = 40} & 175.596 & 0.243 \\ 
        \hline

        \multirow{3}{*}{\textbf{Low-rank synthetic data}} & \textbf{r = 10} & 1.013 & 0.770 \\ 
        \cline{2-4} & \textbf{r = 20} & 1.222 & 0.570 \\ 
        \cline{2-4} & \textbf{r = 40} & 3.014 & 0.317 \\ 
        \hline

        \multirow{3}{*}{\textbf{Cameraman image}} & \textbf{r = 10} & 10.535 & 0.122 \\ 
        \cline{2-4} & \textbf{r = 20} & 42.871 & 0.083 \\ 
        \cline{2-4} & \textbf{r = 40} & 175.030 & 0.049 \\ 
        \hline
        
        \multirow{3}{*}{\textbf{Fotball network}} & \textbf{r = 10} & 21.739 & 0.704 \\ 
        \cline{2-4} & \textbf{r = 20} & 85.049 & 0.529 \\ 
        \cline{2-4} & \textbf{r = 40} & 335.440 & 0.322 \\ 
        \hline
    \end{tabular}}
    \caption{Comparison of relative errors of initializations.}
    \label{tab:initialization}
    \end{center}
\end{table} 
We observe that our SVD-based method provides better initializations than the random approach, while its error decreases as the rank increases (as opposed to random initialization). 

\subsubsection{Synthetic data}

The dataset used for compression is a synthetic low-rank matrix generated by multiplying two matrices $A \in \mathbb{R}^{p \times k}$ and $B \in \mathbb{R}^{k \times q}$ obtained from a stadard normal distribution in order to have $X = AB \in \mathbb{R}^{p \times q}$ of rank $k$. The two datasets used are \(100 \times 100\) with a true rank of 35 and another of dimension \(250 \times 250\) with a true rank of 150. 
Fig.~\ref{fig:comparison_synth_all} presents the final reconstruction errors achieved by the four methods after 300 iterations for our approach and 40,000 iterations for AGD. The number of iterations for the latter was limited due to the rapid increase in computational time as the rank $r$ grew. 

We refer to our proposed methods as follows: 
\(BCD_{\frac{1}{L}}\) when using GD-based updates with the Lipschitz step size, 
\(BCD_{\eta^*}\) when using GD-based updates with the optimal step size, 
and \(BCD^*\) when using the exact resolution method. 

\begin{figure}[!htbp]
    \centering
    \begin{minipage}[t]{0.48\columnwidth}
        \centering
        \begin{tikzpicture}[scale=0.45]
            \begin{axis}[
                xlabel={Factorization rank \textit{r}},
                ylabel = {$\frac{\| X - \tilde{X} \|}{\| X \|}$},
                ymode=log,
                grid=both,
                legend style={at={(0.25, 0.5)}, anchor=north, legend columns=1, font=\LARGE}, 
                tick label style={font=\LARGE},
                xlabel style={font=\huge}, 
                ylabel style={font=\huge}
            ]
                
                \addplot[color=red, mark=o] table [x=x4, y=y4, col sep=space] {Graphes/100_100_synth.txt};
                \addlegendentry{AGD}

                \addplot[color=violet, mark=triangle*] table [x=x1, y=y1, col sep=space] {Graphes/100_100_synth.txt};
                \addlegendentry{\(BCD^*\)}
    
                \addplot[color=blue, style=dashed] table [x=x2, y=y2, col sep=space] {Graphes/100_100_synth.txt};
                \addlegendentry{\(BCD_{\eta^*}\)}

                \addplot[color=green, mark=square*] table [x=x3, y=y3, col sep=space] {Graphes/100_100_synth.txt};
                \addlegendentry{\(BCD_{\frac{1}{L}}\)}

            \end{axis}
        \end{tikzpicture}
        \vspace{0.5em}
        \textbf{(a)} $m$ = $n$ = 100, $r$ = 35.
    \end{minipage}
    \begin{minipage}[t]{0.48\columnwidth}
        \centering
        \begin{tikzpicture}[scale=0.45]
            \begin{axis}[
                xlabel={Factorization rank \textit{r}},
                ylabel = {$\frac{\| X - \tilde{X} \|}{\| X \|}$},
                ymode=log,
                grid=both,
                legend style={at={(0.25, 0.5)}, anchor=north, legend columns=1, font=\LARGE}, 
                tick label style={font=\LARGE},
                xlabel style={font=\huge}, 
                ylabel style={font=\huge}
            ]
                
                \addplot[color=red, mark=o] table [x=x4, y=y4, col sep=space] {Graphes/250_250_synth.txt};
                \addlegendentry{AGD}

                \addplot[color=violet, mark=triangle*] table [x=x1, y=y1, col sep=space] {Graphes/250_250_synth.txt};
                \addlegendentry{\(BCD^*\)}
    
                \addplot[color=blue, style=dashed] table [x=x2, y=y2, col sep=space] {Graphes/250_250_synth.txt};
                \addlegendentry{\(BCD_{\eta^*}\)}

                \addplot[color=green, mark=square*] table [x=x3, y=y3, col sep=space] {Graphes/250_250_synth.txt};
                \addlegendentry{\(BCD_{\frac{1}{L}}\)}

            \end{axis}
        \end{tikzpicture}
        \vspace{0.5em}
        \textbf{(b)} $m$ = $n$ = 250, $r$ = 150.
    \end{minipage}
    \caption{Performance comparison of our methods and AGD~\cite{b2} on synthetic datasets.}
    \label{fig:comparison_synth_all}
\end{figure}
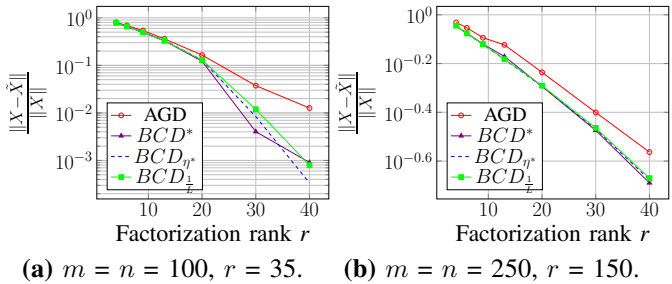

As the results obtained for the three methods are pretty close, we chose to use the exact resolution version for the following experiments because it is a bit more efficient and faster to converge on average particularly for smaller target ranks.  
Next, we allow the methods sufficient time to converge, conducting a second test with the default 225,000 iterations for AGD~\cite{b2} and 225 iterations for \(BCD^*\). We also extended the experiment to a full-rank synthetic dataset generated from standard normal distribution to evaluate the methods under a different scenario. The results are shown in Fig.~\ref{fig:comparison_synth}.

\begin{figure}[!htbp]
    \centering
    \begin{minipage}[t]{0.48\columnwidth}
        \centering
        \begin{tikzpicture}[scale=0.45]
            \begin{axis}[
                xlabel={Iterations},
                ylabel = {$\frac{e(t) - e_{min}}{e(0) - e_{min}}$},
                ymode=log,
                grid=both,
                legend style={at={(0.62, 0.95)}, anchor=north, legend columns=2, font=\LARGE}, 
                tick label style={font=\LARGE},
                xlabel style={font=\huge}, 
                ylabel style={font=\huge}
            ]
                \addplot[color=red] table [x=x1, y=y1, col sep=space] {Graphes/synth_ciaperoni_it.txt};
                \addlegendentry{\(BCD^*\)}
    
                \addplot[color=blue, style=dashed] table [x=x2, y=y2, col sep=space] {Graphes/synth_ciaperoni_it.txt};
                \addlegendentry{AGD}
            \end{axis}
        \end{tikzpicture}
        \vspace{0.5em}
        \textbf{(a)} Full-rank synthetic data
    \end{minipage}
    \begin{minipage}[t]{0.48\columnwidth}
        \centering
        \begin{tikzpicture}[scale=0.45]
            \begin{axis}[
                xlabel={Iterations},
                ylabel = {$\frac{e(t) - e_{min}}{e(0) - e_{min}}$},
                ymode=log,
                grid=both,
                legend style={at={(0.4, 0.2)}, anchor=north, legend columns=2, font=\LARGE}, 
                tick label style={font=\LARGE},
                xlabel style={font=\huge}, 
                ylabel style={font=\huge}
            ]
                \addplot[color=red] table [x=x1, y=y1, col sep=space] {Graphes/low_rank_synth_ciaperoni_it.txt};
                \addlegendentry{\(BCD^*\)}
    
                \addplot[color=blue, style=dashed] table [x=x2, y=y2, col sep=space] {Graphes/low_rank_synth_ciaperoni_it.txt};
                \addlegendentry{AGD}
            \end{axis}
        \end{tikzpicture}
        \vspace{0.5em}
        \textbf{(b)} Low-rank synthetic data
    \end{minipage}
    \caption{Performance comparison of our method and AGD~\cite{b2}  on synthetic datasets. 
    \label{fig:comparison_synth}}
\end{figure}
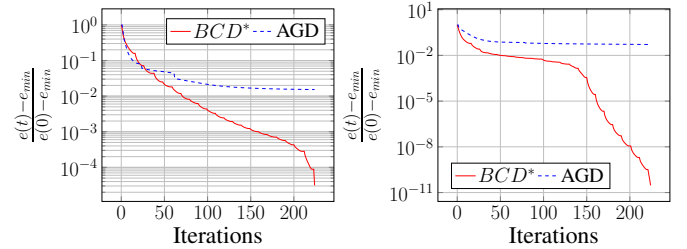

\subsubsection{Real dataset}

For the comparison with AGD~\cite{b2}, we choose two datasets: the cameraman image \(\in \mathbb{R}^{256 \times 256}\) from skimage library and the football netwotk adjacence matrix \(\in \mathbb{R}^{115 \times 115}\) from \cite{b6} in Fig.~\ref{fig:comparison_real1}.
This time, AGD~\cite{b2} is really effective and 225 iterations were not enough to outperform it. So we decided to plot the error in function of the time in order to better show the advantage of our method. 
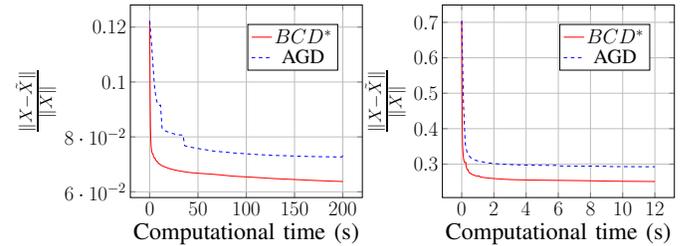
\begin{figure}[!htbp]
    \centering
    \begin{minipage}[t]{0.45\columnwidth}
        \centering
        \begin{tikzpicture}[scale=0.45]
            \begin{axis}[
                xlabel={Computational time (s)},
                ylabel = {$\frac{\| X - \tilde{X} \|}{\| X \|}$},
                grid=both,
                legend style={at={(0.7, 0.9)}, anchor=north, legend columns=1, font=\LARGE}, 
                tick label style={font=\LARGE},
                xlabel style={font=\huge}, 
                ylabel style={font=\huge}
            ]
                
                \addplot[color=red] table [x=x1, y=y1, col sep=space] {Graphes/cam_ciaperoni_time_same.txt};
                \addlegendentry{\(BCD^*\)}
    
                \addplot[color=blue, style=dashed] table [x=x2, y=y2, col sep=space] {Graphes/cam_ciaperoni_time_same.txt};
                \addlegendentry{AGD}
            \end{axis}
        \end{tikzpicture}
        \vspace{0.5em}
        \textbf{(a)} Cameraman image
    \end{minipage}
    \hfill
    \begin{minipage}[t]{0.48\columnwidth}
        \centering
        \begin{tikzpicture}[scale=0.45]
            \begin{axis}[
                xlabel={Computational time (s)},
                ylabel = {$\frac{\| X - \tilde{X} \|}{\| X \|}$},
                grid=both,
                legend style={at={(0.7, 0.9)}, anchor=north, legend columns=1, font=\LARGE}, 
                tick label style={font=\LARGE},
                xlabel style={font=\huge}, 
                ylabel style={font=\huge}
            ]
                
                \addplot[color=red] table [x=x1, y=y1, col sep=space] {Graphes/football_ciaperoni_time_same.txt};
                \addlegendentry{\(BCD^*\)}
    
                \addplot[color=blue, style=dashed] table [x=x2, y=y2, col sep=space] {Graphes/football_ciaperoni_time_same.txt};
                \addlegendentry{AGD}
            \end{axis}
        \end{tikzpicture}
        \vspace{0.5em}
        \textbf{(b)} Football Network
    \end{minipage}
    \caption{Performance comparison of our method and AGD~\cite{b2} on real datasets.}
    \label{fig:comparison_real1}
\end{figure}

The previous graph were obtained with a target rank of 10. In order to verify the performance of our algorithm we tested it for different values of $r$ on the football dataset. The results are presented in Table \ref{tab:football_ciaperoni}. 
\\
The process was interrupted if the relative error did not decrease at least by \(10^{-6}\) for 10 consecutive iterations.

\begin{table}[!htbp]
    \begin{center}
        \resizebox{0.5\textwidth}{!}{
            \begin{tabular}{|c|c|c|c|c|}
                \hline
                \textbf{ }& \multicolumn{2}{|c|}{\textbf{AGD}} & \multicolumn{2}{|c|}{\textbf{BCD}}\\
                \cline{2-5} 
                \textbf{Rank $r$}& \textbf{\textit{Relative error}}& \textbf{\textit{Time (s)}}& \textbf{\textit{Relative error}}& \textbf{\textit{Time (s)}} \\
                \hline
                \hline
                4 & 0.627 & 2.386 & 0.619 & 7.53 \\
                \hline
                6 & 0.508 & 6.47 & 0.495 & 9.59 \\
                \hline
                9  & 0.351 & 8.65 & 0.315 & 12.56 \\ 
                \hline
                13 & 0.155 & 5.30 & 0.066 & 15.83 \\ 
                \hline
                20 & 0.032 & 37.72 & 0.014 & 81.35 \\ 
                \hline
                30 & 4.790\(\times 10^{-3}\) & 37.97 & 2.108\(\times 10^{-3}\) & 125.14 \\
                \hline
                40 & 1.192\(\times 10^{-3}\) & 38.39 & 4.512\(\times 10^{-6}\) & 232.36 \\
                \hline
            \end{tabular}
        }
    \caption{Comparison between AGD and BCD on the Football network dataset.} 
    \label{tab:football_ciaperoni}
    \end{center}
\end{table}

These results show that our algorithm consistently achieved a better minimum across all tests conducted, outperforming AGD~\cite{b2} in each case. This performance validates the efficiency of our approach, allowing us to confidently extend the study to the multi-factor generalization and compare our method to the SVD. 

\subsection{Comparison with the SVD}

We compare our method to the SVD across several datasets. Specifically, we evaluate our method with 2, 3, and 4 low-rank matrices. Additionally, we include two more datasets in our experiments: the Low Resolution Spectrometer dataset \cite{b8}, which belongs to \( \mathbb{R}^{531 \times 101}\) , and the adjacency matrix of the character relationships network from Les Misérables, introduced by D. Knuth in \cite{b7}, which is a matrix in \( \mathbb{R}^{77 \times 77}\). Fig.~\ref{fig:comparison_real2} shows the result. 
The y-axis represents the sum of the ranks across all low-rank matrices. For example, a value of 12 indicates a rank of 12 for SVD, while the Hadamard decomposition used ranks of 6, 4, and 3 for its 2, 3 or 4 low-rank matrices. This approach requires the rank to be divisible by these numbers leading to smaller amount of points on the graphs. The values tested were 12, 24, 36, and 48. 
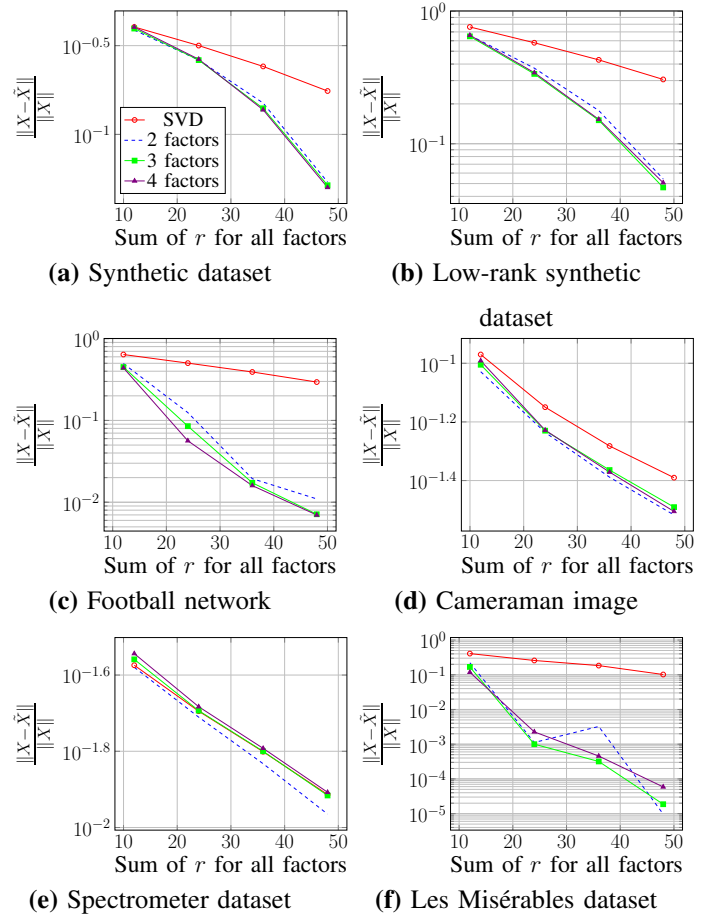
\begin{figure}[!htbp]
    \centering
    \begin{minipage}[t]{0.45\columnwidth}
        \centering
        \begin{tikzpicture}[scale=0.45]
            \begin{axis}[
                xlabel={Sum of \(r\) for all factors},
                ylabel = {$\frac{\| X - \tilde{X} \|}{\| X \|}$},
                ymode=log,
                grid=both,
                legend style={at={(0.25, 0.5)}, anchor=north, legend columns=1, font=\LARGE}, 
                tick label style={font=\LARGE},
                xlabel style={font=\huge}, 
                ylabel style={font=\huge} 
            ]
                
                \addplot[color=red, mark=o] table [x=x4, y=y4, col sep=space] {Graphes/synth_SVD.txt};
                \addlegendentry{SVD}
    
                \addplot[color=blue, style=dashed] table [x=x1, y=y1, col sep=space] {Graphes/synth_SVD.txt};
                \addlegendentry{2 factors}

                \addplot[color=green, mark=square*] table [x=x2, y=y2, col sep=space] {Graphes/synth_SVD.txt};
                \addlegendentry{3 factors}

                \addplot[color=violet, mark=triangle*] table [x=x3, y=y3, col sep=space] {Graphes/synth_SVD.txt};
                \addlegendentry{4 factors}
            \end{axis}
        \end{tikzpicture}
        \vspace{0.5em}
        \textbf{(a)} Synthetic dataset
    \end{minipage}
    \hfill
    \begin{minipage}[t]{0.48\columnwidth}
        \centering
        \begin{tikzpicture}[scale=0.45]
            \begin{axis}[
                xlabel={Sum of \(r\) for all factors},
                ylabel = {$\frac{\| X - \tilde{X} \|}{\| X \|}$},
                ymode=log,
                grid=both,
                tick label style={font=\LARGE},
                xlabel style={font=\huge}, 
                ylabel style={font=\huge}
            ]
    
                \addplot[color=red, mark=o] table [x=x4, y=y4, col sep=space] {Graphes/low_rank_synth_SVD.txt};
    
                \addplot[color=blue, style=dashed] table [x=x1, y=y1, col sep=space] {Graphes/low_rank_synth_SVD.txt};

                \addplot[color=green, mark=square*] table [x=x2, y=y2, col sep=space] {Graphes/low_rank_synth_SVD.txt};

                \addplot[color=violet, mark=triangle*] table [x=x3, y=y3, col sep=space] {Graphes/low_rank_synth_SVD.txt};
            \end{axis}
        \end{tikzpicture}
        \vspace{0.5em}
        \textbf{(b)} Low-rank synthetic dataset
    \end{minipage} 
    \begin{minipage}[t]{0.45\columnwidth}
        \centering
        \begin{tikzpicture}[scale=0.45]
            \begin{axis}[
                xlabel={Sum of \(r\) for all factors},
                ylabel = {$\frac{\| X - \tilde{X} \|}{\| X \|}$},
                ymode=log,
                grid=both,
                tick label style={font=\LARGE},
                xlabel style={font=\huge}, 
                ylabel style={font=\huge}
            ]
                
                \addplot[color=red, mark=o] table [x=x4, y=y4, col sep=space] {Graphes/foot_SVD.txt};
    
                \addplot[color=blue, style=dashed] table [x=x1, y=y1, col sep=space] {Graphes/foot_SVD.txt};

                \addplot[color=green, mark=square*] table [x=x2, y=y2, col sep=space] {Graphes/foot_SVD.txt};

                \addplot[color=violet, mark=triangle*] table [x=x3, y=y3, col sep=space] {Graphes/foot_SVD.txt};
            \end{axis}
        \end{tikzpicture}
        \vspace{0.5em}
        \textbf{(c)} Football network
    \end{minipage}
    \hfill
    \begin{minipage}[t]{0.48\columnwidth}
        \centering
        \begin{tikzpicture}[scale=0.45]
            \begin{axis}[
                xlabel={Sum of \(r\) for all factors},
                ylabel = {$\frac{\| X - \tilde{X} \|}{\| X \|}$},
                ymode=log,
                grid=both,
                tick label style={font=\LARGE},
                xlabel style={font=\huge}, 
                ylabel style={font=\huge}
            ]
                
                \addplot[color=red, mark=o] table [x=x4, y=y4, col sep=space] {Graphes/cam_SVD.txt};
    
                \addplot[color=blue, style=dashed] table [x=x1, y=y1, col sep=space] {Graphes/cam_SVD.txt};

                \addplot[color=green, mark=square*] table [x=x2, y=y2, col sep=space] {Graphes/cam_SVD.txt};

                \addplot[color=violet, mark=triangle*] table [x=x3, y=y3, col sep=space] {Graphes/cam_SVD.txt};
            \end{axis}
        \end{tikzpicture}
        \vspace{0.5em}
        \textbf{(d)} Cameraman image
    \end{minipage}

    \begin{minipage}[t]{0.45\columnwidth}
        \centering
        \begin{tikzpicture}[scale=0.45]
            \begin{axis}[
                xlabel={Sum of \(r\) for all factors},
                ylabel = {$\frac{\| X - \tilde{X} \|}{\| X \|}$},
                ymode=log,
                grid=both,
                tick label style={font=\LARGE},
                xlabel style={font=\huge}, 
                ylabel style={font=\huge}
            ]
                
                \addplot[color=red, mark=o] table [x=x4, y=y4, col sep=space] {Graphes/spectro_SVD.txt};
    
                \addplot[color=blue, style=dashed] table [x=x1, y=y1, col sep=space] {Graphes/spectro_SVD.txt};

                \addplot[color=green, mark=square*] table [x=x2, y=y2, col sep=space] {Graphes/spectro_SVD.txt};

                \addplot[color=violet, mark=triangle*] table [x=x3, y=y3, col sep=space] {Graphes/spectro_SVD.txt};
            \end{axis}
        \end{tikzpicture}
        \vspace{0.5em}
        \textbf{(e)} Spectrometer dataset
    \end{minipage}
    \hfill
    \begin{minipage}[t]{0.48\columnwidth}
        \centering
        \begin{tikzpicture}[scale=0.45]
            \begin{axis}[
                xlabel={Sum of \(r\) for all factors},
                ylabel = {$\frac{\| X - \tilde{X} \|}{\| X \|}$},
                ymode=log,
                grid=both,
                tick label style={font=\LARGE},
                xlabel style={font=\huge}, 
                ylabel style={font=\huge}
            ]
                
                \addplot[color=red, mark=o] table [x=x4, y=y4, col sep=space] {Graphes/miserables_SVD.txt};
    
                \addplot[color=blue, style=dashed] table [x=x1, y=y1, col sep=space] {Graphes/miserables_SVD.txt};

                \addplot[color=green, mark=square*] table [x=x2, y=y2, col sep=space] {Graphes/miserables_SVD.txt};

                \addplot[color=violet, mark=triangle*] table [x=x3, y=y3, col sep=space] {Graphes/miserables_SVD.txt};
            \end{axis}
        \end{tikzpicture}
        \vspace{0.5em}
        \textbf{(f)} Les Misérables dataset
    \end{minipage}
    \caption{Performance comparison of our method for different numbers of factors and SVD.}
    \label{fig:comparison_real2}
\end{figure} 
The Hadamard decomposition method outperforms the SVD for the same number of parameters, particularly on sparse matrices such as the football network and the Les Misérables adjacency matrices, which have densities of only 0.093\% and 0.086\%, respectively.

\section{Conclusion} \label{sec:concl}

In this study, we developed efficient algorithms using BCD for the Hadamard decomposition. Leveraging an alternating optimization strategy, our algorithm effectively decomposes the global non-convex problem into tractable, unconstrained least squares, sub-problems. We further improved performance using a clever SVD-based initialization and extended the framework to handle more than two low-rank matrices, thereby enabling approximations with higher effective ranks (as illustrated by Theorem~\ref{th:identity}) while maintaining computational efficiency. 
Our experiments showed that our proposed algorithms outperformed a gradient-based method from~\cite{b2}, while allowing to obtain significantly lower reconstruction errors than the SVD, particularly for sparse datasets, which illustrates that the Hadamard decomposition is more expressive. 


\appendix

In this appendix, we first provide the maximum possible achievable rank by a Hadamard decomposition with budget $R$. Then we prove Theorem~\ref{th:identity} which attains the maximum possible rank for the identity matrix. Finally, we report some numerical experiments of our algorithm applied on the identity matrix to see whether it is able to recover this solution. 

\subsection{Maximum rank for a given budget}
\label{sec:appendixA}

Given a Hadamard decomposition with $p$ low-rank matrices, $X = (W_1 H_1) \circ \dots (W_p H_p)$, we address the following question: what is the maximum possible achievable rank of $X$ for a given budget $R = \sum_{i=1}^p r_i$ where $r_i$ is the inner dimension of $W_i H_i$? 

Mathematically, given a natural $R>1$, among all the combinations of positive numbers that adds up to $R$, we are looking for the one that maximizes the product of these numbers. More formally, we want to identify the length $p$ and the entries of a vector $r$ of positive natural numbers solving:
\begin{equation} \label{eq:maxrank}
    \max_{r_i \in \mathbb N, p} \; \prod_{i=1}^p r_i \quad \text{such that }\quad \sum_{i=1}^pr_i=R.
\end{equation}
Note that $p$ is part of the optimization: we can choose the length of the vector $r$ to maximize the objective. 

\begin{lemma} \label{lem:lemma1} For $R > 1$, An optimal solution of~\eqref{eq:maxrank} is given by:
    \begin{itemize}
        \item 
        $r=\{\underbrace{3,\hdots,3}_{k \text{ times}}\}$ with $p=k$ when $R=3k$ with $k\in\mathbb{N}$,
        \item 
        $r=\{2,2,\underbrace{3,\hdots,3}_{k-1 \text{ times}}\}$
        with $p=k+1$ when  $R=3k+1$ with $k\in\mathbb{N}$,
        \item 
        $r=\{2,\underbrace{3,\hdots,3}_{k \text{ times}}\}$
        with $p=k+1$ when  $R=3k+2$ with $k\in\mathbb{N}$.
    \end{itemize}
\end{lemma}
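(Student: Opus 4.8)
The plan is to prove the lemma in two stages: first establish structural properties that any optimal solution must satisfy (no part equal to $1$, no part $\geq 4$, at most two parts equal to $2$), and then observe that these constraints pin down the stated solutions up to the value of $R \bmod 3$. This is a classic "integer break" argument.

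First I would argue that no optimal solution contains a part $r_i = 1$: replacing a pair $(1, r_j)$ by the single part $r_j + 1$ keeps the sum fixed, changes $p$, and multiplies the product by $\frac{r_j+1}{r_j} > 1$ (whenever there is at least one other part, which holds since $R > 1$). Next, no optimal solution contains a part $r_i \geq 4$: such a part can be split as $r_i = 2 + (r_i - 2)$ with $2(r_i-2) \geq r_i$ for $r_i \geq 4$, so the product does not decrease; in fact for $r_i \geq 5$ it strictly increases, and for $r_i = 4$ we get a tie ($4 = 2\cdot 2$), so we may assume all parts lie in $\{2,3\}$. Finally, at most two parts equal $2$: three $2$'s sum to $6$ and give product $8$, whereas two $3$'s sum to $6$ and give product $9 > 8$, so any triple of $2$'s can be profitably replaced by a pair of $3$'s. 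Combining: an optimal solution consists of some number $a \in \{0,1,2\}$ of $2$'s and some number $k'$ of $3$'s with $2a + 3k' = R$.

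The last step is bookkeeping on $R \bmod 3$. If $R = 3k$, then $2a \equiv 0 \pmod 3$ forces $a = 0$, giving $k$ threes. If $R = 3k+1$, then $2a \equiv 1 \pmod 3$ forces $a = 2$ (since $a=0$ gives remainder $0$, $a=1$ gives remainder $2$), leaving $3k' = 3k - 3$, i.e. $k' = k-1$ threes plus two twos. If $R = 3k+2$, then $2a \equiv 2 \pmod 3$ forces $a = 1$, leaving $3k' = 3k$, i.e.\ $k$ threes plus one two. These are exactly the three cases in the statement, and the corresponding values of $p$ follow by counting parts. I would also check the small base cases ($R = 2, 3, 4, 5$) explicitly to make sure the formulas are consistent there.

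I do not anticipate a genuine obstacle here; the only mild subtlety is that the $4 = 2 \cdot 2$ tie means the optimal solution is not unique, so the lemma should be read as exhibiting \emph{an} optimal solution rather than \emph{the} optimal solution — which is how it is phrased. One should be slightly careful that each exchange argument is valid (i.e., that after the swap we still have a feasible vector of \emph{positive naturals} and that $p$ remains a positive integer), but since every operation either merges two parts or splits one part into two, feasibility is automatic. The write-up can be kept short by presenting the three exchange arguments as a single paragraph and then the modular arithmetic as a short case split.
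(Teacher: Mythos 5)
Your proposal is correct and follows essentially the same exchange-argument strategy as the paper's proof: eliminate parts equal to $1$, eliminate parts $\geq 4$ (modulo the $4 = 2\cdot 2$ tie), bound the number of $2$'s by two, and finish with the case split on $R \bmod 3$. The only cosmetic difference is the splitting rule for large parts (you use $r_i \mapsto (2, r_i-2)$ while the paper uses $4+K \mapsto (1+K, 3)$), which does not change the substance.
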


\begin{proof} 

    Let us show the following facts which will imply the result.  
    
    \begin{enumerate}
        \item $r_i\neq 1$ for all $i$: 
        Since $R>1$, if the $j$th entry of $r$ is equal to one, that is, $r_j=1$, it means there is at least another entry $r_k \geq 1$, with $k\neq j$.
        By removing the $j$th entry of $r$ (reducing $p$ by one) and increasing the $k$th entry of $r$ by one, we obtain a better solution, since $1 * r_k < r_k+1$.

    \item  $r_i\leq 4$ for all $i$:         
        Suppose the $j$th entry of $r$ is such that $r_j=4+K$ with $K\geq 1$. 
        Then it is possible to create another admissible solution by setting $r_j=1+K$ and by adding an entry $r_{p+1}=3$.
        This solution is better since  
        $(4+K) < 3(K+1)$        for any $K\geq1$. 
        
        \item There exists an optimal solution with $r_i\in\{2,3\}$ for all~$i$: 
        By 1) and 2) above, and the facts that $2+2=4$ and $2^2=4$, all the entries equal to $4$ can be replaced by two entries equal to $2$. 
        
        \item In an optimal solution with $r_i\in\{2,3\}$ for all $i$, there are at most two entries equal to 2:   
        Since $2+2+2=3+3$ and $2^3<3^2$, it is not optimal to have more than two $2$'s. 
    
    \end{enumerate}
\end{proof}

\subsection{Construction of an Hadamard decomposition of the identity} 
\label{sec:appendixB}

Before showing properly how to construct the factorization of the identity, let us describe the core argument with the following lemma.  

\begin{lemma}\label{lem:lemma2} Let the $m$-by-$n$ matrix $A$ be 
 the Hadamard product of $p$ matrices such that $A=X_1\circ \dots \circ X_p$ where $X_i=W_iH_i$ for all $i$. For any natural $r$, it is possible to write the  $mr$-by-$nr$ matrix $I_r\kronecker A$, where $\kronecker$ is the Kronecker product, as the Hadamard product of $p+1$ matrices $X'_i$, that is, 
 \begin{equation}   \label{eq:krondec}
I_r\kronecker A = 
X'_1\circ \dots \circ X'_p \circ X'_{p+1}, 
 \end{equation}
 where $\rank(X'_i)=\rank(X_i)$ for all $i=1,\dots,p$ and $\rank(X'_{p+1})=r$. 
\end{lemma}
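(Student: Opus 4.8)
The plan is to construct the matrices $X'_i$ explicitly. The key observation is that $I_r \kronecker A$ is a block-diagonal matrix: it has $r$ copies of $A$ placed along the diagonal of an $mr$-by-$nr$ grid of $m$-by-$n$ blocks, and zeros in the off-diagonal blocks. I want to realize this block-diagonal structure as a Hadamard product where $p$ of the factors carry the content of $A$ (replicated across all blocks) and one extra factor is responsible for "switching off" the off-diagonal blocks.

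First I would set, for $i = 1, \dots, p$, the matrix $X'_i := \mathbf{1}_{r \times r} \kronecker X_i = (\mathbf{1}_r \mathbf{1}_r^T) \kronecker (W_i H_i)$, i.e.\ the $r$-by-$r$ block matrix all of whose blocks equal $X_i$. Using the mixed-product property of the Kronecker product, $X'_i = (\mathbf{1}_r \kronecker W_i)(\mathbf{1}_r^T \kronecker H_i)$, which exhibits $X'_i$ as a product of an $mr$-by-$r_i$ matrix and an $r_i$-by-$nr$ matrix, so $\rank(X'_i) \le \rank(X_i)$; equality holds because $X_i$ appears as a submatrix (a diagonal block) of $X'_i$. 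Then the Hadamard product $X'_1 \circ \dots \circ X'_p$ is the $r$-by-$r$ block matrix every block of which equals $X_1 \circ \dots \circ X_p = A$; that is, $X'_1 \circ \dots \circ X'_p = \mathbf{1}_{r \times r} \kronecker A$.

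Next I would define $X'_{p+1} := I_r \kronecker \mathbf{1}_{m \times n}$, the $r$-by-$r$ block matrix whose diagonal blocks are the all-ones $m$-by-$n$ matrix and whose off-diagonal blocks are zero. Since $X'_{p+1} = I_r \kronecker (\mathbf{1}_m \mathbf{1}_n^T) = (I_r \kronecker \mathbf{1}_m)(I_r \kronecker \mathbf{1}_n^T)$ is a product of an $mr$-by-$r$ matrix and an $r$-by-$nr$ matrix, and $I_r$ is a submatrix of $X'_{p+1}$ up to the nonzero scalar entries, we get $\rank(X'_{p+1}) = r$. Finally, taking the Hadamard product with $X'_{p+1}$ preserves the diagonal blocks (multiplying $A$ entrywise by $\mathbf{1}_{m \times n}$) and kills the off-diagonal blocks (multiplying by $0$), so
\[
X'_1 \circ \dots \circ X'_p \circ X'_{p+1} = I_r \kronecker A,
\]
as required, with the stated ranks.

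There is no real obstacle here; the only points requiring a little care are the rank bookkeeping — verifying that the factorizations $X'_i = (\mathbf{1}_r \kronecker W_i)(\mathbf{1}_r^T \kronecker H_i)$ and $X'_{p+1} = (I_r \kronecker \mathbf{1}_m)(I_r \kronecker \mathbf{1}_n^T)$ genuinely have the claimed inner dimensions and that no rank is lost (which follows from each $X_i$, resp.\ a scaled $I_r$, sitting as a submatrix) — and being consistent about the block ordering so that the Hadamard products line up block-by-block. All of this is routine given the mixed-product property of $\kronecker$ and the definition of $\circ$.
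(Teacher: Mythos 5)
Your construction is exactly the one used in the paper: $X'_i = \mathbf{1}_{r\times r}\kronecker X_i$ with the factorization $(\mathbf{1}_{r\times 1}\kronecker W_i)(\mathbf{1}_{1\times r}\kronecker H_i)$, and $X'_{p+1} = I_r \kronecker \mathbf{1}_{m\times n}$ to zero out the off-diagonal blocks. The proof is correct and follows essentially the same route, with somewhat more explicit rank bookkeeping than the paper provides.
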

 \begin{proof} For all $i$, let $X'_i = 1_{r\times r}\kronecker X_i = W_i'H_i'$ with 
 $1_{r\times r}$ the all-one $r$-by-$r$ matrix, 
 $W_i'=1_{r \times 1} \kronecker W_i$ and $H_i'= 1_{1 \times r} \kronecker H_i$ such that $\text{rank}(X'_i)=\text{rank}(X_i)$.
By multiplying elementwise $X'_i$ with the matrix $X'_{p+1}=I_r\kronecker 1_{m\times n}$ of rank $r$, we obtain a block diagonal matrix of size $mr$-by-$nr$ where the block $X_i$ is repeated $r$ times on the diagonal.
Hence \eqref{eq:krondec} holds. 
\end{proof}

By combining Lemma~\ref{lem:lemma1} and Lemma~\ref{lem:lemma2}, we can prove Theorem~\ref{th:identity}.  
\begin{proof}[Proof of Theorem~\ref{th:identity}] 
Since $\text{rank}(A\circ B) \leq \text{rank}(A)\text{rank}(B)$, the rank of $X_1\circ X_2 \circ \hdots \circ X_p$ can be as much as $\prod_{i=1}^p\text{rank}(X_i)$, that is $n\leq \prod_{i=1}^p\text{rank}(X_i)$.
Denoting $r_i=\text{rank}(X_i)$ and considering that $\sum_{i=1}^pr_i=R$, the maximum value of $\prod_{i=1}^p r_i$ is then given by the optimal solution of the problem described in Lemma~\ref{lem:lemma1}.
Using $X_i=I_2$ or $X_i=I_3$, and by using recursively the construction described in Lemma~\ref{lem:lemma2}, it is possible to reach this maximum value, and construct the identity matrix. 
\end{proof}

\subsection{Numerical results on the  decomposition of the identity}


In this section, we test if our algorithm is able to retrieve the exact decomposition for different sizes of identity matrices. We run the algorithm hundred times with random initialization matrices obtained by sampling the standard normal distribution and computed the number of times the relative error was lower than \(10^{-5}\). 
Table \ref{tab:identity} presents the results obtained. 





\begin{table}[!htbp]
    \begin{center}
    \begin{tabular}{|c|c|c|>{\centering\arraybackslash}p{1.5cm}|c|}
        \hline
        $R$ & $n$ & $p$ & \textbf{Percentage of success} & \textbf{Average relative error} \\
        \hline
        \hline
        6 & 9 & 2 & 96\% & $1.770 \times 10^{-5} \pm 1.650 \times 10^{-4}$\\
        \hline
        7 & 12 & 3 & 95\% & $0.005776 \pm 0.04041$\\
        \hline
        8 & 18 & 3 & 70\% & $0.02122 \pm 0.06745$\\
        \hline
        9 & 27 & 3 & 56\% & $0.0625 \pm 0.0912$\\
        \hline
        10 & 36 & 4 & 8\% & $0.1755 \pm 0.08521$ \\
        \hline
        11 & 54 & 4 &  0\% & $0.2351 \pm 0.05371$\\
        \hline
        12 & 55 & 4 &  6\% & $0.08679 \pm 0.09002$ \\
        \hline
        12 & 81 & 4 &  0\% & $0.309 \pm 0.042$\\
        \hline
        16 & 81 & 6 &  13\% & $0.1550 \pm 0.0747$ \\
        \hline
    \end{tabular}
    \caption{Percentage of success on decomposing the $n$-by-$n$ identity matrix for a budget $R$ with $p$ low-rank matrices.}
    \label{tab:identity}
    \end{center}
\end{table}

Our algorithm achieves good reconstruction performance on small identity matrices, up to $n = 27$, with more than 50\% solution with error below \(10^{-5}\). As the dimension increases, the algorithm has a harder time to find exact decompositions, e.g., it never finds a global optimum our of the 100 initializations for $n=54, 81$. This is due to the non-convexity of the Hadamard decomposition problem.  However, if we increase the budget (which overparametrizes the set of solutions), it can find exact decompositions.


\end{document}